\pgfplotsset{compat=1.14}
\newtheorem{theorem}{Theorem}
\newtheorem{example}{Example}
\newtheorem{lemma}{Lemma}
\newcommand{\bE}{\mathbb{E}}
\newcommand{\bF}{\mathbb{F}}
\newcommand{\bR}{\mathbb{R}}
\newcommand{\cB}{\mathcal{B}}
\newcommand{\cF}{\mathcal{F}}
\newcommand{\cH}{\mathcal{H}}
\newcommand{\cS}{\mathcal{S}}
\newcommand{\cT}{\mathcal{T}}
\newcommand{\cU}{\mathcal{U}}
\newcommand{\bolda}{\mathbf{a}}
\newcommand{\boldb}{\mathbf{b}}
\newcommand{\boldf}{\mathbf{f}}
\newcommand{\boldv}{\mathbf{v}}
\newcommand{\boldw}{\mathbf{w}}
\newcommand{\boldx}{\mathbf{x}}
\newcommand{\boldy}{\mathbf{y}}
\newcommand{\boldz}{\mathbf{z}}
\newcommand{\boldtau}{\boldsymbol{\tau}}
\newcommand{\onenorm}[1]{\lVert #1 \rVert_1}
\newcommand{\twonorm}[1]{\lVert #1 \rVert_2}
\newcommand{\infnorm}[1]{\lVert #1 \rVert_\infty}
\DeclareMathOperator{\sign}{sign}
\DeclareMathOperator{\logistic}{logistic}
\DeclareMathOperator{\ReLU}{ReLU}
\DeclareMathOperator{\Id}{Id}
\DeclareSymbolFont{bbold}{U}{bbold}{m}{n}
\DeclareSymbolFontAlphabet{\mathbbold}{bbold}
\newcommand{\1}{\mathbbold{1}}
\author{\textbf{Netanel Raviv}$^\star$, \IEEEauthorblockN{\textbf{Siddharth Jain}$^\dagger$, \textbf{Pulakesh Upadhyaya}$^\ddagger$, \textbf{Jehoshua Bruck}$^\dagger$, and \textbf{Anxiao (Andrew) Jiang}$^\ddagger$}
	\IEEEauthorblockA{
		$^\star$Department of Computer Science and Engineering, Washington University in St. Louis, St. Louis 63130, MO, USA\\
		$^\dagger$Department of Electrical Engineering, California Institute of Technology, Pasadena 91125, CA, USA\\
		$^\ddagger$Department of Computer Science and Engineering, Texas A\&M University, College Station 77843, TX, USA \\}}
\begin{document}

\title{CodNN -- Robust Neural Networks\\From Coded Classification}

\maketitle

\IEEEpeerreviewmaketitle

\begin{abstract}
Deep Neural Networks (DNNs) are a revolutionary force in the ongoing information revolution, and yet their intrinsic properties remain a mystery. In particular, it is widely known that DNNs are highly sensitive to noise, whether adversarial or random. This poses a fundamental challenge for hardware implementations of DNNs, and for their deployment in critical applications such as autonomous driving.

In this paper we construct robust DNNs via error correcting codes. By our approach, either the data or internal layers of the DNN are coded with error correcting codes, and successful computation under noise is guaranteed. Since DNNs can be seen as a layered concatenation of classification tasks, our research begins with the core task of classifying noisy coded inputs, and progresses towards robust DNNs.

We focus on binary data and linear codes. Our main result is that the prevalent \textit{parity code} can guarantee robustness for a large family of DNNs, which includes the recently popularized \textit{binarized neural networks}. Further, we show that the coded classification problem has a deep connection to Fourier analysis of Boolean functions.

In contrast to existing solutions in the literature, our results do not rely on altering the training process of the DNN, and provide mathematically rigorous guarantees rather than experimental evidence.
\end{abstract}

\renewcommand{\thefootnote}{\arabic{footnote}}

\section{Introduction}\label{section:introduction}
Deep Neural Networks (DNNs) have become a dominating force in Artificial Intelligence (AI), bringing revolutions in science and technology. A massive amount of academic and industrial research is being devoted to implementing DNNs in hardware \cite{Guo}. Hardware-implemented DNNs are appearing in phones, sensors, healthcare devices, and more, which will revolutionize every sector of society~\cite{Mohammadi}, and make AI systems increasingly energy-efficient and ubiquitous.

In parallel, DNNs are known to be highly susceptible to adversarial interventions. In a recent line of works which followed~\cite{Szegedy}, it was shown that by adding a small (and often indistinguishable to humans) amount of noise to the \textit{inputs} of a DNN, one can cause it to reach nonsensical conclusions. More recently, it was shown~\cite{Shamir} that in some DNN architectures one can attain similar effects by changing as little as one or two entries of the input. This reveals an orthogonal concern of a similar nature from the adversarial machine learning perspective: performance degradation due to malicious attacks. 

\begin{figure*}[t]
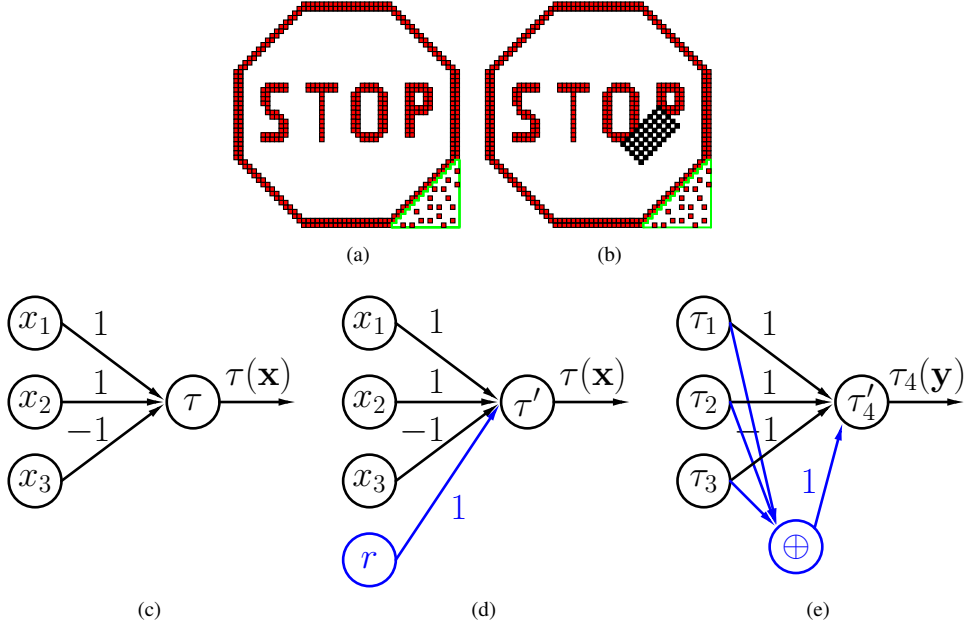

	\centering
	\subfloat[]{%
		\input{TiKz_STOP_Redundancy.tex}
		\label{figure:STOP_Redundancy}%
	}
	\subfloat[]{%
		\input{TiKz_STOP_StickerAndRedundancy.tex}
		\label{figure:STOP_StickerAndRedundancy}%
	}\\
	\subfloat[]{%
		\begin{tikzpicture}[line cap=round,line join=round,x=1cm,y=1cm, scale=0.35,every node/.style={scale=0.67}]
\clip(0,-0.1) rectangle (12,11.1);
\node at (2,10) {\huge{$x_1$}};
\node at (2,7) {\huge{$x_2$}};
\node at (2,4) {\huge{$x_3$}};
\node at (10.5,8) {\huge{$\tau(\boldx)$}};
\draw [line width=1pt] (2,4) circle (1cm);
\draw [line width=1pt] (2,7) circle (1cm);
\draw [line width=1pt] (2,10) circle (1cm);

\node at (4.5,10) {\huge{$1$}};
\node at (4.5,7.75) {\huge{$1$}};
\node at (4,5.9) {\huge{$-1$}};
\node at (8,7) {\huge{$\tau$}};

\draw [line width=1pt] (8,7) circle (1cm);

\draw [-{Latex[width=1mm]},line width=1pt] (3,10) -- (7,7);
\draw [-{Latex[width=1mm]},line width=1pt] (3,7) -- (7.,7.);
\draw [-{Latex[width=1mm]},line width=1pt] (3,4.) -- (7.,7.);

\draw [-{Latex[width=1mm]},line width=1pt] (9,7) -- (12,7);

\end{tikzpicture}
		\label{figure:UncodedToyNeuron}%
	} 
	\subfloat[]{%
		\begin{tikzpicture}[line cap=round,line join=round,x=1cm,y=1cm, scale=0.35,every node/.style={scale=0.67}]
\clip(0,-0.1) rectangle (12,11.1);
\node at (2,10) {\huge{$x_1$}};
\node at (2,7) {\huge{$x_2$}};
\node at (2,4) {\huge{$x_3$}};
\node at (2,1) {\huge{\textcolor{blue}{$r$}}};
\node at (10.5,8) {\huge{$\tau(\boldx)$}};
\draw [line width=1pt] (2,4) circle (1cm);
\draw [line width=1pt] (2,7) circle (1cm);
\draw [line width=1pt] (2,10) circle (1cm);
\draw [line width=1pt,color=blue] (2,1) circle (1cm);
\node at (8,7) {\huge{$\tau'$}};

\node at (4,10) {\huge{$\phantom{-}1$}};
\node at (4,7.75) {\huge{$\phantom{-}1$}};
\node at (4,5.9) {\huge{$-1$}};
\node at (5.4,3) {\huge{\textcolor{blue}{$1$}}};

\draw [line width=1pt] (8,7) circle (1cm);

\draw [-{Latex[width=1mm]},line width=1pt] (3,10) -- (7,7);
\draw [-{Latex[width=1mm]},line width=1pt] (3,7) -- (7.,7.);
\draw [-{Latex[width=1mm]},line width=1pt] (3,4.) -- (7.,7.);
\draw [-{Latex[width=1mm]},line width=1pt,color=blue] (3,1.) -- (7.,7.);

\draw [-{Latex[width=1mm]},line width=1pt] (9,7) -- (12,7);

\end{tikzpicture}
		\label{figure:ToyNeuronInput}%
	}
	\subfloat[]{%
		\begin{tikzpicture}[line cap=round,line join=round,x=1cm,y=1cm, scale=0.35,every node/.style={scale=0.67}]
\clip(0,-0.1) rectangle (12,11.1);
\node at (2,10) {\huge{$\tau_1$}};
\node at (2,7) {\huge{$\tau_2$}};
\node at (2,4) {\huge{$\tau_3$}};
\node at (5.5,1.5) {\textcolor{blue}{\huge{$\oplus$}}};
\node at (10.5,8) {\huge{$\tau_4(\boldy)$}};
\node at (8,7) {\huge{$\tau_4'$}};
\draw [line width=1pt] (2,4) circle (1cm);
\draw [line width=1pt] (2,7) circle (1cm);
\draw [line width=1pt] (2,10) circle (1cm);
\draw [line width=1pt, color=blue] (5.5,1.5) circle (1cm);

\node at (4.5,10) {\huge{$1$}};
\node at (4.5,7.75) {\huge{$1$}};
\node at (4,5.9) {\huge{$-1$}};

\draw [line width=1pt] (8,7) circle (1cm);

\draw [-{Latex[width=1mm]},line width=1pt] (3,10) -- (7,7);
\draw [-{Latex[width=1mm]},line width=1pt, color=blue] (3,10) -- (4.79,2.207);

\draw [-{Latex[width=1mm]},line width=1pt] (3,7) -- (7.,7.);
\draw [-{Latex[width=1mm]},line width=1pt, color=blue] (3,7) -- (4.79,2.207);

\draw [-{Latex[width=1mm]},line width=1pt] (3,4.) -- (7.,7.);
\draw [-{Latex[width=1mm]},line width=1pt, color=blue] (3,4) -- (4.79,2.207);

\draw [-{Latex[width=1mm]},line width=1pt, color=blue] (6.207,2.207) -- (7.292,6.292);

\draw [-{Latex[width=1mm]},line width=1pt] (9,7) -- (12,7);

\node at (6,4) {\huge{\textcolor{blue}{$1$}}};

\end{tikzpicture}
		\label{figure:InnerRedundancy}%
	}
	\caption{(a) a stop sign with added redundancy; (b) the added redundancy guarantees correct classification under noise, e.g., a sticker; (c) an uncoded neuron; (d) a coded neuron with one extra bit of redundancy, guaranteed to compute~$\tau(\boldx)$ correctly, even if \textit{any} synapse is erased (see Example~\ref{example:codedNeuron}); (e) added redundancy \textit{inside} the DNN--the coded neuron~$\tau_4'$ computes~$\tau_4(\boldy)$ even if any of its incoming synapses is erased, where~$\boldy$ is the output of the neurons $\tau_1,\tau_2$, and~$\tau_3$ in the previous layer. 
	}
	\label{figure:illustration}
\end{figure*}

There exists a rich body of research which studies how to make DNNs robust to noise. This includes noise that is injected into the neurons/synapses, or into the inputs. Even though computation under noise has been studied since the 1950's~\cite{vonNeumann}, solutions have been almost exclusively heuristic. 

To combat adversarial attacks to the inputs, much focus was given on adjusting the \textit{training} process to produce more robust DNNs, e.g., by adjusting the regularization expression~\cite{NNattack2}, or the loss function~\cite{Madry}. These approaches usually involve intractable optimization problems, and succeed insofar as the underlying optimization succeeds.

Combating noise in neurons/synapses has also enjoyed a recent surge of interest~\cite{Torres}, which builds upon the previous wave of interest in DNNs in the early 1990's~\cite{Koren}. Most of this line of research focuses on replication methods (called augmentation), retraining, and providing statistical frameworks for testing fault tolerance of DNNs (e.g., training a DNN to remember a coded version of all possible outputs~\cite{Ito}). It is also worth mentioning that to a certain degree, DNNs tend to present some natural fault-tolerance without any intervention. This phenomenon is conjectured to be connected to \textit{over-provisioning}~\cite{El-Mhamdi}, i.e., the fact that in most cases one uses more neurons than necessary, but rigorous guarantees remain elusive.

In this paper we propose a fundamentally new approach for handling noise in DNNs. In this approach, the inputs to neurons are coded by using an error correcting code, and the neurons are modified accordingly, so that correct output is guaranteed as long as the noise level is below a certain threshold. Clearly, the encoding function must be simpler than conducting the computation itself, and should apply universally to a large family of neurons; we also aim for an efficient end-to-end design that does not require decoding. 

Our approach is depicted in Fig.~\ref{figure:illustration} for the famous case-study of stop-sign classification~\cite{STOPsign}. 
In this case-study, an autonomous vehicle uses a DNN to classify a stop sign as such, exposing the passenger to the perils of misclassification due to noise (e.g., a sticker). In our approach we envision addition of redundancy \textit{to the actual physical object} (Fig.~\ref{figure:STOP_Redundancy}), which aids the DNN in classification under noise (Fig.~\ref{figure:STOP_StickerAndRedundancy}). To facilitate this vision, the neurons inside the network must be revised accordingly, e.g., by adding weighted synapses (Fig.~\ref{figure:UncodedToyNeuron} and Fig.~\ref{figure:ToyNeuronInput}). Alternatively, the inputs to some neurons might come from other neurons inside the network, rather than from the physical world; this case better encapsulates hardware failures, and redundancy is computed by additional components inside the DNN (Fig.~\ref{figure:InnerRedundancy}).

Preliminaries are discussed in Section~\ref{section:preliminaries}, where a tight connection to the~$\ell_1$-metric is revealed. Simple but inefficient solutions, that rely on replication or Fourier analysis, are discussed in Section~\ref{section:simple}. Finally, the main result of this paper is given in Section~\ref{section:parity}, where it is shown that the well-known parity code can guarantee successful classification under noise, and applications to a large family of DNNs are discussed.

 \section{Preliminaries}\label{section:preliminaries}
A DNN is a layered and directed acyclic graph, in which the first layer is called \textit{the input layer}. Each edge (or \textit{synapse}) corresponds to a real number called \textit{weight}, and nodes in intermediate layers (also called \textit{hidden layers}) are called \textit{neurons}. Each neuron acts as computation unit, that applies some \textit{activation function} on its inputs, weighted by the respective synapses. The result of the computation in the last layer are the outputs of the DNN.

Traditionally, the activation function is~$\sign(\boldx\boldw^\top-\theta)$, where~$\boldx$ is a vector of inputs, $\boldw$ is a vector of weights, $\theta$ is a constant called~\textit{bias}, and
\begin{align*}
	\sign(x)=
	\begin{cases}
	\phantom{-}1 & \mbox{ if }x\ge 0\\
	-1 & \mbox{ if }x< 0
	\end{cases}.
\end{align*}
However, contemporary DNNs often employ continuous approximations of~$\sign(\cdot)$, known as \textit{sigmoid} functions (such as~$\logistic(x)=\frac{1}{1+\exp(-x)}$ or $\tanh(x)$) or piecewise linear alternatives (such as $\ReLU(x)=\max\{0,x\}$) in order to enable analytic learning algorithms (such as \textit{backpropagation}). In our work, in order to establish rigorous and discrete guarantees, we focus on~$\sign(\cdot)$. Further, we focus on \textit{binary}~$\pm1$ inputs, which correspond to the binary field~$\bF_2$ by identifying~$0$ as~$1$ and~$1$ as~$-1$, and exclusive or as product.

At the computational level, faults in DNN hardware appear as bit-errors, bit-erasures or analog noise, which can be permanent or transient. In this work we focus on bit-errors and bit-erasures, that are formally defined shortly. We denote scalars by lowercase letters~$a,b,\ldots$, vectors by lowercase boldface letters~$\bolda,\boldb,\ldots$, and use the same letter to refer to a vector and its entries (e.g., $\bolda=(a_1,a_2,\ldots)$). We use~$d_i(\cdot,\cdot)$, $\lVert\cdot\rVert_i$, and~$\cB_i(\boldz,r)$ to denote the~$\ell_i$-distance, $\ell_i$-norm, and~$\ell_i$-ball cantered at~$\boldz$ with radius~$r$, respectively, for $i\in\{0,1,2,\ldots,\infty\}$. We use~$d_H(\cdot,\cdot)$ to denote Hamming distance, and let~$\1_m$ be a vector of~$m$ ones.
\subsection{Framework and problem definition}
For a given neuron~$\tau:\bF_2^n\to \bF_2$, where $\tau(\boldx)=\sign(\boldx\cdot\boldw^\top-\theta)$ for some~$\boldw\in\bR^n$ and~$\theta\in\bR$, a triple~$(E,\boldv,\mu)$ is called a \textit{solution}, where~$E:\bF_2^n\to\bF_2^m$, $\boldv\in\bR^m$, and~$\mu\in\bR$. The respective \textit{coded neuron} is~$\tau'(E(\boldx))=\sign(E(\boldx)\boldv^\top-\mu)$. For nonnegative integers~$t$ and~$s$, the coded neuron~$\tau'$ is robust against~$t$ erasures and~$s$ errors ($(t,s)$-robust, for short) if for every disjoint~$t$-subset $\cT\subseteq[m]$, and~$s$-subset~$\cS\subseteq[m]$, we have that
\begin{align}\label{equation:problem}
&\sign(\boldx\cdot\boldw^\top-\theta)=\nonumber\\&\sign\left(\sum_{j\in[m]\setminus(\cT\cup\cS)}E(\boldx)_jv_j-\sum_{j\in\cS}E(\boldx)_jv_j-\mu\right)
\end{align}
for every~$\boldx\in\bF_2^n$. 

Namely, when computing over data encoded by~$E$, correct output for \textit{every}~$\boldx\in\bF_2^n$ is guaranteed, even if at most~$t$ of the inputs to~$\tau'$ are omitted (erasures), and at most~$s$ are negated (errors). Further, for a nonnegative integer~$r$ we say that~$\tau'$ is~$r$-robust if it is~$(t,s)$-robust for every nonnegative~$t$ and~$s$ such that~$t+2s\le r$.

For~$\boldv\in\bR^m$ and~$\mu\in\bR$, let~$\cH(\boldv,\mu)=\{\boldy\in\bR^m |\boldy\boldv^\top=\mu \}$. For a given solution~$(E,\boldv,\mu)$, we say that the \textit{minimum distance} of the respective coded neuron is
\begin{align*}
	d&=d(E,\boldv,\mu)=d_1(E(\bF_2^n),\cH(\boldv,\mu))\\
	&=\min_{\boldx\in\bF_2^n}d_1(E(\boldx),\cH(\boldv,\mu)).
\end{align*}
The choice of the~$\ell_1$-metric will be made clear in the sequel. The figure of merit by which we measure a given solution is its \textit{relative distance}~$d/m$.

\begin{example}\label{example:useless}
	For a given neuron~$\tau$, and an integer~$m$, let 
	\begin{align*}
		E(\boldx)=
		\begin{cases}
		\phantom{-}\1_m & \mbox{if }\tau(\boldx)=1\\
		-\1_m & \mbox{if }\tau(\boldx)=-1\\
		\end{cases}.
	\end{align*} 
	It is readily verified that the solution~$(E,\1_m,0)$ is~$(m-1)$-robust.
\end{example}

Since layers in DNNs normally contain multiple neurons, the solution in Example~\ref{example:useless} is useless for constructing robust DNNs. Instead, one would like to have \textit{joint coding}~$E$ for a large family of neurons.

\textbf{Problem Definition:} For a given set of neurons~$\{\tau_i(\boldx)=\sign(\boldx\boldw_i^\top-\theta_i)\}_{i=1}^\ell$ find a joint coding function~$E$ and~$\{ \boldv_i,\mu_i \}_{i=1}^\ell$ which maximize~$d_{\min}/m$, where~$d_{\min}=\min_{i\in[\ell]} d(E,\boldv_i,\mu_i)$.

Furthermore, we restrict our attention to functions~$E$ which encode binary linear codes, due to their prevalence in hardware and ease of analysis. Since we use the~$\{\pm1\}$-representation of~$\bF_2$, every entry of~$E(\boldx)$ is a multilinear monomial in the entries of~$\boldx$.

\definecolor{LightCyan}{rgb}{0.88,1,1}
\begin{example}\label{example:codedNeuron}
	Fig.~\ref{figure:UncodedToyNeuron} depicts the uncoded neuron~$\tau(\boldx)=\sign(x_1+x_2-x_3)$, and Fig.~\ref{figure:ToyNeuronInput} depicts its coded version~$\tau'(\boldx)=\sign(x_1+x_2-x_3+r)$, where~$r=x_1x_2x_3$. Table~\ref{table:noisy} shows two examples of robustness to any~$1$-erasure.
	\begin{table}[h]
		\small
		\centering
		\begin{tabular}{|c|c|c|}
			\rowcolor{LightCyan}
			\hline
			$(x_1,x_2,x_3,r)$ & Erasure & $\tau'(\text{noisy }E(\boldx))=\tau(\boldx)$ \\ \hline\hline
			&   $x_1$ & $\sign(0-1-1-1)=-1$ \\ \cline{2-3} 
			&   $x_2$ & $\sign(1-0-1-1)=-1$ \\ \cline{2-3} 
			&   $x_3$ & $\sign(1-1-0-1)=-1$ \\ \cline{2-3} 
			\multirow{-4}{*}{$(1,-1,1,-1)$} & $r$ & $\sign(1-1-1-0)=-1$ \\ \hline
			&   $x_1$ & $\sign(-0+1+1+1)=1$ \\ \cline{2-3} 
			&   $x_2$ & $\sign(-1+0+1+1)=1$ \\ \cline{2-3} 
			&   $x_3$ & $\sign(-1+1+0+1)=1$ \\ \cline{2-3} 
			\multirow{-4}{*}{$(-1,1,-1,1)$} & $r$ & $\sign(-1+1+1+0)=1$ \\ \hline
		\end{tabular}
		\caption{Two examples of correct computation of~$\tau(\cdot)$ (Figure~\ref{figure:UncodedToyNeuron}) by~$\tau'(E(\cdot))$ (Figure~\ref{figure:ToyNeuronInput}). This holds for the remaining six inputs as well.}
		\label{table:noisy}
	\end{table}
\end{example}

\subsection{Robustness and the \texorpdfstring{$\ell_1$}{l1}-metric}
In this section we justify the above definitions, and in particular, the use of the~$\ell_1$-metric to obtain robustness. First, notice that errors and erasures while evaluating~$\tau'$ can be seen as changes in~$E(\boldx)$. For example, let~$\boldv=(v_1,v_2,v_3)$ and~$E(\boldx)=(y_1,y_2,y_3)$, and then an erasure at entry~$1$ is equivalent to evaluating~$\tau'$ at the point~$(0,y_2,y_3)$. Similarly, an error in entry~$2$ is equivalent to evaluating~~$\tau'$ at~$(y_1,-y_2,y_3)$. 

As such, both errors and erasure can be seen as evaluation of the same coded neuron~$\tau'$ on a data point which is shifted along axis-parallel lines. Therefore, the encoded points must be far away from~$\cH(\boldv,\mu)$ in~$\ell_1$-distance. More precisely, since error and erasures do not cause any point to shift outside the closed hypercube~$[-1,1]^m$, it is only necessary to have large~$\ell_1$-distance from~$\cH'=\cH'(\boldv,\mu)=\cH(\boldv,\mu)\cap[-1,1]^m$. 

First, we provide the formula for the~$\ell_1$-distance of a point from a hyperplane.

\begin{lemma}\label{lemma:l1}\cite[Sec.~5]{LpDistance}
	For every~$\boldz\in\bR^m$ we have that $d_1(\boldz,\cH)=\frac{|\boldz\cdot\boldv^\top-\mu|}{\infnorm{\boldv}}$.
\end{lemma}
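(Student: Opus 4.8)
The plan is to reduce this geometric distance to a clean constrained minimization and then settle it by $\ell_1/\ell_\infty$ duality. First I would reparametrize in terms of the displacement from~$\boldz$: writing any~$\boldy\in\cH$ as~$\boldy=\boldz-\boldd$, the defining constraint~$\boldy\boldv^\top=\mu$ becomes~$\boldd\boldv^\top=\boldz\boldv^\top-\mu$. Setting~$c=\boldz\boldv^\top-\mu$, the distance~$d_1(\boldz,\cH)$ equals the minimum of~$\onenorm{\boldd}$ subject to the single linear constraint~$\boldd\boldv^\top=c$. This converts the point-to-hyperplane problem into a transparent optimization over~$\boldd$.

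For the lower bound I would invoke H\"older's inequality, i.e.\ the duality between the~$\ell_1$ and~$\ell_\infty$ norms: for every feasible~$\boldd$,
\begin{align*}
	|c|=|\boldd\boldv^\top|\le\onenorm{\boldd}\cdot\infnorm{\boldv}.
\end{align*}
Hence every feasible~$\boldd$ obeys~$\onenorm{\boldd}\ge|c|/\infnorm{\boldv}$, which already yields~$d_1(\boldz,\cH)\ge|\boldz\boldv^\top-\mu|/\infnorm{\boldv}$.

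For the matching upper bound I would exhibit an explicit minimizer that concentrates all of its mass on a single coordinate. Since~$\boldv\neq\boldsymbol{0}$ (otherwise~$\cH$ is not a hyperplane), choose an index~$i^\ast$ with~$|v_{i^\ast}|=\infnorm{\boldv}$, and take~$\boldd$ to be zero in every coordinate except~$d_{i^\ast}=c/v_{i^\ast}$. This~$\boldd$ is feasible, as~$\boldd\boldv^\top=d_{i^\ast}v_{i^\ast}=c$, and it satisfies~$\onenorm{\boldd}=|c|/|v_{i^\ast}|=|c|/\infnorm{\boldv}$, meeting the lower bound. Thus the infimum is attained and equals the claimed value.

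The argument is essentially routine, so I do not expect a genuine obstacle; the only subtlety worth flagging is the degenerate case~$\boldv=\boldsymbol{0}$, which I would dispose of by noting that~$\cH$ is posited to be a hyperplane and so~$\boldv$ is implicitly nonzero. Conceptually, the H\"older step is the crux: it is precisely where the~$\ell_\infty$-norm of~$\boldv$ enters as the H\"older conjugate of the ambient~$\ell_1$-norm, which structurally explains why the formula features~$\infnorm{\boldv}$ in place of the Euclidean norm appearing in the classical point-to-hyperplane distance.
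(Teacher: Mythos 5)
Your proof is correct. A point of comparison: the paper itself gives no proof of this lemma---it imports the formula wholesale from \cite[Sec.~5]{LpDistance}, where Melachrinoudis solves the minimum $L_p$-distance from a point to a hyperplane analytically for general~$p$. Your argument is therefore a self-contained, elementary substitute specialized to~$p=1$: you reformulate the distance as minimizing~$\onenorm{\boldd}$ subject to the single linear constraint~$\boldd\boldv^\top=\boldz\boldv^\top-\mu=c$, obtain the lower bound from the H\"older ($\ell_1$/$\ell_\infty$ duality) inequality~$|\boldd\boldv^\top|\le\onenorm{\boldd}\,\infnorm{\boldv}$, and match it with the one-coordinate displacement~$d_{i^\ast}=c/v_{i^\ast}$ supported where~$|v_{i^\ast}|=\infnorm{\boldv}$. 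Both directions are airtight, attainment of the infimum is explicit rather than assumed, and your disposal of the degenerate case~$\boldv=\boldsymbol{0}$ (excluded because~$\cH$ is posited to be a hyperplane) is the right disclaimer. What the paper's citation buys is generality---the reference covers all~$\ell_p$ norms at once and identifies the projection point in general---whereas your route buys the reader a short, verifiable, dependency-free proof of exactly the fact the paper needs, and it makes structurally transparent why~$\infnorm{\boldv}$ (the dual norm of~$\ell_1$) appears in the denominator.
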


Second, we provide a necessary and sufficient condition for the robustness of a coded neuron~$\tau'(E(\boldx))=\sign(E(\boldx)\boldv^\top-\mu)$. We denote the positive points of~$\tau$ by~$\cF^+$, and the negative points by~$\cF^-$.

\begin{theorem}\label{theorem:NecessaryAndSufficient}
	For a positive integer~$r$ and a neuron~$\tau(\boldx)=\sign(\boldx\boldw^\top-\theta)$, a coded neuron $\tau'(\boldx)=\sign(E(\boldx)\boldv^\top-\mu)$ is~$r$-robust if and only if
	\begin{align}\label{equation:NecessaryAndSufficientConditions}
	\sign(\boldx\boldw^\top-\theta)&=\sign(E(\boldx)\boldv^\top-\mu)\mbox{ for every }\boldx\in\bF_2^n,\nonumber\\
	r&\le d_1(E(\cF^+),\cH')\mbox{, and}\nonumber\\
	r&< d_1(E(\cF^-),\cH').
	\end{align}
\end{theorem}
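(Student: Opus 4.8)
The plan is to prove both implications by translating each erasure/error pattern into a controlled $\ell_1$-displacement of the encoded point, and then reasoning about when such a displacement can flip the sign of the pre-activation. Write $\boldy=E(\boldx)\in\{\pm1\}^m$ and $f(\boldz)=\boldz\boldv^\top-\mu$, so that $\tau'(E(\boldx))=\sign(f(\boldy))$ and $\cH'=\{\boldz\in[-1,1]^m:f(\boldz)=0\}$. The first observation is that applying $t$ erasures (coordinates sent to $0$) and $s$ errors (coordinates negated) to $\boldy$ produces a point $\boldy'\in[-1,1]^m$ whose $j$-th coordinate lies in $\{y_j,0,-y_j\}$, whose noisy pre-activation in \eqref{equation:problem} is exactly $f(\boldy')$, and which satisfies $d_1(\boldy,\boldy')=t+2s$ (since $|y_j|=1$, an erasure costs $1$ and an error costs $2$ in $\ell_1$). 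Thus $r$-robustness is precisely the statement that $\sign(f(\boldy'))=\tau(\boldx)$ for every such $\boldy'$ with $t+2s\le r$.

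For the sufficiency direction I would argue by contradiction using convexity. Assume \eqref{equation:NecessaryAndSufficientConditions} and fix a pattern with $t+2s\le r$, giving $\boldy'$ as above. If $\boldx\in\cF^+$ then $f(\boldy)\ge0$, and in fact $f(\boldy)>0$ since the second line forces $d_1(\boldy,\cH')\ge r\ge1$; suppose $f(\boldy')<0$. As $f$ is affine and $[-1,1]^m$ is convex, the segment $[\boldy,\boldy']$ lies in the cube and, by the intermediate value theorem, crosses $\cH$ at a point $\boldp\in\cH'$ strictly between $\boldy$ and $\boldy'$. Because $\ell_1$-distance is additive along a straight segment, $d_1(\boldy,\boldp)<d_1(\boldy,\boldy')=t+2s\le r$, whence $d_1(E(\cF^+),\cH')\le d_1(\boldy,\cH')<r$, contradicting the second line. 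The case $\boldx\in\cF^-$ is identical except that the forbidden event is $f(\boldy')\ge0$, so the crossing point may coincide with $\boldy'$ and we only obtain $d_1(\boldy,\boldp)\le r$; this is exactly why the third line must be strict, reflecting the convention $\sign(0)=1$ (harmless for positive points, fatal for negative ones).

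For the necessity direction, the first line of \eqref{equation:NecessaryAndSufficientConditions} is simply the $t=s=0$ instance of robustness. For the distance bounds I would prove the contrapositive by explicitly constructing a harmful pattern, and this is where the main work lies: I must \emph{discretize} the continuous $\ell_1$-projection onto $\cH'$ into an integer-cost erasure/error pattern. The key enabling fact is that an erasure is exactly ``half'' of an error — moving a coordinate from $y_j$ to $0$ travels half the $\ell_1$-distance of the full flip $y_j\mapsto-y_j$ and changes $f$ by exactly half as much, so both operations decrease $f$ at the same rate $|v_j|$ per unit of $\ell_1$-cost. Consequently the greedy minimizer of $d_1(\boldy,\cH')$ — which moves coordinates toward decreasing $f$ in order of decreasing $|v_j|$, fully flipping all but at most one coordinate — is realized, at every integer cost $c$, by $\lfloor c/2\rfloor$ errors together with at most one erasure, so the maximal achievable decrease of $f$ at integer cost matches the continuous projection and reaching $\cH'$ costs exactly $d_1(\boldy,\cH')$. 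If $\boldx\in\cF^+$ with $d_1(\boldy,\cH')<r$, then rounding the single fractional coordinate of the projection to a full flip (or, when cheaper, to an erasure) overshoots the boundary to $f(\boldy')<0$ at integer cost $\lfloor d_1(\boldy,\cH')\rfloor+1\le r$; this pattern violates robustness. The negative case is symmetric, and the floor/ceiling arithmetic reproduces precisely the strict inequality of the third line.

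The principal obstacle is therefore the discretization in the necessity direction, i.e.\ guaranteeing that the real-valued distance $d_1(\boldy,\cH')$ is attained by an admissible integer pattern; the half-flip identity is what makes this lossless. A secondary technical point is the degenerate case in which the hyperplane is tangent to the cube, so that $f$ cannot change sign and the greedy decrease saturates before crossing the boundary. I would dispatch this by noting that, under the first line of \eqref{equation:NecessaryAndSufficientConditions}, a non-constant $\tau$ forces the hyperplane to separate some $\pm1$ points, so $f$ attains both signs strictly on the cube and the decrease is strictly increasing on the relevant range, validating the construction.
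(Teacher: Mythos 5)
Your sufficiency argument is essentially the paper's own: both reduce an admissible pattern of $t$ erasures and $s$ errors to an $\ell_1$-displacement of $E(\boldx)$ of total length $t+2s$ inside $[-1,1]^m$, and then argue that a sign flip forces a crossing of $\cH'$ at $\ell_1$-distance smaller than (resp.\ at most) that displacement; the paper moves along an axis-parallel path where you use the straight segment and the intermediate value theorem, an immaterial difference, and your handling of the strict/non-strict asymmetry between $\cF^+$ and $\cF^-$ via the convention $\sign(0)=+1$ is exactly right. Where you genuinely depart from the paper is the necessity direction: the paper reduces it to the claim that some vertex of $\cB_1(E(\boldx),r)\cap[-1,1]^m$ lying across $\cH$ corresponds to an admissible pattern with $|\cT|+2|\cS|\le r$, and then explicitly defers the proof of that claim to ``future versions of this paper.'' Your greedy discretization fills precisely this admitted gap, and its engine --- an erasure costs $1$ and moves the pre-activation by $|v_j|$, an error costs $2$ and moves it by $2|v_j|$, so the two trade at the same rate --- is correct. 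Writing $d_1(E(\boldx),\cH')=2k+\alpha$ with $k$ full flips of the largest helpful coefficients and a partial move $\alpha\in[0,2)$, your recipe (erase the last coordinate when $\alpha<1$, flip it when $\alpha\ge1$) lands strictly past the hyperplane at cost $\lfloor 2k+\alpha\rfloor+1\le r$ in every case except $\alpha=0$, where there is no fractional coordinate to round and you need one \emph{further} helpful coordinate; that coordinate exists exactly by your non-saturation observation, so the construction closes.

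One caveat deserves emphasis: your dispatch of the tangency case assumes $\tau$ is non-constant, and this restriction is not cosmetic --- for constant $\tau$ the ``only if'' direction is actually false as stated. Take $n=1$, $\tau\equiv+1$ (e.g.\ $\boldw=0$, $\theta=-1$), $E(x)=(x,x)$, $\boldv=(1,1)$, $\mu=-2$: the noisy pre-activation $y_1'+y_2'+2$ is nonnegative for every pattern, so $\tau'$ is $r$-robust for every $r$, yet $E(-1)=(-1,-1)\in\cH'$ gives $d_1(E(\cF^+),\cH')=0<r$. So no argument can remove a hypothesis of this kind, and your proof covers exactly the cases in which the statement is provable. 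A small secondary correction: what your construction actually needs is not that $f$ ``attains both signs strictly'' on the cube, but only that $f$ takes strictly negative values somewhere (guaranteed at encoded points of $\cF^-$ by the first line of \eqref{equation:NecessaryAndSufficientConditions}, once $\cF^-\ne\varnothing$), and, for the negative-point case, that $\cH'\ne\varnothing$, which follows from the violated distance bound being finite; the stronger claim can fail (every positive point may sit on $\cH$) and is not needed.
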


\begin{proof}
	Assume that the conditions in~\eqref{equation:NecessaryAndSufficientConditions} hold. To show that~$\tau'$ is~$r$-robust we must show that~\eqref{equation:problem} holds for every~$\boldx\in\bF_2^n$ and every mutually disjoint~$\cS$ and~$\cT$ such that~$|\cT|+2|\cS|\le r$. Assuming for contradiction that~$\tau'$ is not~$r$-robust, there exists some~$\boldx\in\bF_2^n$ and corresponding sets~$\cS$ and~$\cT$ such that~$E(\boldx)$ is misclassified under erasures in~$\cT$ and errors in~$\cS$. 
	Since any set of errors or erasures keeps~$E(\boldx)$ inside~$[-1,1]^m$, it follows that this misclassification of~$E(\boldx)$ corresponds to moving it along an axis-parallel path~$P$ of length~$|P|=|\cT|+2|\cS|$, which crosses~$\cH'$. 
	
	If~$\boldx\in\cF^+$, then to attain misclassification we must have~$|P|>d_1(E(\boldx),\cH')$. However, this implies that~$r\ge |\cT|+2|\cS|=|P|>d_1(E(\boldx),\cH')\ge r$, a contradiction. If~$\boldx\in\cF^-$, then to attain misclassification we must have~$|P|\ge d_1(E(\boldx),\cH')$. However, this implies that~$r\ge |\cT|+2|\cS|=|P|\ge d_1(E(\boldx),\cH')>r$, another contradiction.
	
	Conversely, assume that~$\tau'$ is $r$-robust. Since~$r\ge0$, it follows that~$\tau'$ is in particular~$(0,0)$-robust, and hence according to~\eqref{equation:problem} if follows that $\sign(\boldx\boldw^\top-\theta)=\sign(E(\boldx)\boldv^\top-\mu)$ for every~$\boldx\in\bF_2^n$. Assume for contradiction that~$r>d_1(E(\cF^+),\cH')$, which implies that there exists~$\boldx\in\cF^+$ such that~$r>d_1(E(\boldx),\cH')$, and let $\cB_\boldx\triangleq \cB_1(E(\boldx),r)\cap [-1,1]^m$. This readily implies that some vertex~$\boldy$ of~$\cB_\boldx$ lies on the opposite side of~$\cH$. It can be proved (full proof will be given in future versions of this paper) that~$\boldy$ is an integer point, and that all such points correspond to erasures in some set~$\cT$ and errors in some set~$\cS$ such that~$|\cT|+2|\cS|\le r$. Therefore, the existence of~$\boldy$ contradicts the~$r$-robustness of~$\tau'$. The proof that~$r<d_1(E(\cF^-),\cH')$ is similar.
\end{proof}

We conclude this section by showing that redundancy is \textit{necessary} for any nontrivial robustness. Since any non-constant neuron must have a positive point~$\boldx$ and a negative point~$\boldy$ such that~$d_H(\boldx,\boldy)=1$, and since any hyperplane must cross the convex hull of~$\boldx$ and~$\boldy$, the following is immediate.

\begin{lemma}\label{lemma:lowerBoundNoRedundancy}
	Unless $\tau(\boldx)=\sign(\boldx\boldw^\top-\theta)$ is constant, the solution~$(E,\boldv,\mu)=(\Id,\boldw,\theta)$ is $0$-robust.
\end{lemma}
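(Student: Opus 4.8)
The claim that $(\Id,\boldw,\theta)$ is $0$-robust is immediate: since $E=\Id$, $\boldv=\boldw$ and $\mu=\theta$, the coded neuron satisfies $\tau'(E(\boldx))=\sign(\boldx\boldw^\top-\theta)=\tau(\boldx)$ for every $\boldx\in\bF_2^n$, so the only instance required by $r=0$ (namely $t=s=0$) holds trivially. The real content of the lemma — and the reason for the hypothesis that $\tau$ be non-constant — is the \emph{sharpness} of this bound: I would show that $(\Id,\boldw,\theta)$ is \emph{not} $1$-robust, so that its robustness is exactly $0$ and genuine redundancy is unavoidable for any $r\ge 1$. The plan is to exhibit encoded points of both classes that lie too close to the separating hyperplane, and then invoke the distance characterization of Theorem~\ref{theorem:NecessaryAndSufficient}.

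First I would produce the critical pair. Because $\tau$ is non-constant, the hypercube $\{\pm1\}^n$ contains a positive point $\boldx\in\cF^+$ and a negative point $\boldy\in\cF^-$ with $d_H(\boldx,\boldy)=1$ (walk along any path from a positive vertex to a negative one and take a consecutive differing pair). In the $\pm1$ representation these points differ in a single coordinate by $2$, so $d_1(\boldx,\boldy)=2$ and the segment $[\boldx,\boldy]$ is axis-parallel. Since $\boldx$ lies on the closed positive side and $\boldy$ on the open negative side of $\cH=\cH(\boldw,\theta)$, the segment crosses $\cH$ at some point $\boldp$; as $\boldp$ is a convex combination of two hypercube vertices it lies in $[-1,1]^n$, hence $\boldp\in\cH'$. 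By additivity of the $\ell_1$-distance along an axis-parallel segment, $d_1(\boldx,\boldp)+d_1(\boldy,\boldp)=d_1(\boldx,\boldy)=2$, and therefore $d_1(\boldx,\cH')+d_1(\boldy,\cH')\le 2$.

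It remains to feed this into Theorem~\ref{theorem:NecessaryAndSufficient}. Writing $a=d_1(\boldx,\cH')$ and $b=d_1(\boldy,\cH')$, the identity encoding gives $E(\cF^+)=\cF^+$ and $E(\cF^-)=\cF^-$, so $d_1(E(\cF^+),\cH')\le a$ and $d_1(E(\cF^-),\cH')\le b$ (each distance being a minimum over the respective set). If $\tau'$ were $1$-robust, Theorem~\ref{theorem:NecessaryAndSufficient} would force $1\le d_1(E(\cF^+),\cH')\le a$ and $1<d_1(E(\cF^-),\cH')\le b$, whence $a+b>2$, contradicting $a+b\le 2$. Thus $\tau'$ is not $1$-robust, and its robustness is exactly $0$. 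I expect the main obstacle to be bookkeeping rather than ideas: one must verify that the crossing point $\boldp$ belongs to the \emph{truncated} hyperplane $\cH'=\cH\cap[-1,1]^n$ (so that the bounds control distances to $\cH'$, as Theorem~\ref{theorem:NecessaryAndSufficient} demands, and not merely to $\cH$), and one must respect the asymmetry between the non-strict condition on $\cF^+$ and the strict condition on $\cF^-$ — it is precisely this asymmetry, together with $a+b\le 2$, that excludes the boundary case $a=b=1$ and closes the argument.
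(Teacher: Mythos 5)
Your proposal is correct and follows essentially the same route as the paper: the paper's own justification is precisely that a non-constant neuron has a positive point and a negative point at Hamming distance~$1$, and that the separating hyperplane must cross their convex hull, which is what you formalize via the distance conditions of Theorem~\ref{theorem:NecessaryAndSufficient}. Your additional care about the truncated hyperplane~$\cH'$ and the strict/non-strict asymmetry fills in details the paper leaves implicit, but the underlying argument is the same.
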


Further, by denoting~$\delta=d_1(\bF_2^n,\cH(\boldw,\theta))$, we have that the relative distance of the solution~$(\Id,\boldw,\theta)$ is~$\delta/n$. However, computing~$\delta$ for a given neuron~$\tau$ is in general NP-complete, by a reduction to PARTITION~\cite{PARTITION}.

\section{A Few Elementary Solutions}\label{section:simple}
\subsection{Robustness by replication}\label{section:replication}
For a vector~$\boldv$, let~$\boldv_{(\ell)}$ be the result of concatenating~$\boldv$ with itself~$\ell$ times, and for~$E:\bF_2^n\to\bF_2^m$ let~$E_{(\ell)}:\bF_2^n\to\bF_2^{\ell m}$ be the function~$E_{(\ell)}(\boldx)=E(\boldx)_{(\ell)}$.

\begin{lemma}\label{lemma:ellreplication}
	Let~$(E,\boldv,\mu)$ be a solution with minimum distance~$d$. Then, for every positive integer~$\ell$, the solution~$(E_{(\ell)},\boldv_{(\ell)},\ell\mu)$ has distance~$\ell d$ and identical relative distance~$d/m$.
\end{lemma}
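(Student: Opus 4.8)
The plan is to reduce everything to the closed-form expression for the $\ell_1$-distance from a hyperplane provided by Lemma~\ref{lemma:l1}, and then to track how each ingredient transforms under replication. First I would fix an arbitrary $\boldx\in\bF_2^n$ and apply Lemma~\ref{lemma:l1} to the replicated solution, writing
\[
d_1\!\left(E_{(\ell)}(\boldx),\cH(\boldv_{(\ell)},\ell\mu)\right)=\frac{\left|E_{(\ell)}(\boldx)\boldv_{(\ell)}^\top-\ell\mu\right|}{\infnorm{\boldv_{(\ell)}}}.
\]

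The two key elementary computations are then the following. Since $E_{(\ell)}(\boldx)$ and $\boldv_{(\ell)}$ each consist of $\ell$ identical concatenated copies of $E(\boldx)$ and $\boldv$, the inner product splits as a sum of $\ell$ identical terms, giving $E_{(\ell)}(\boldx)\boldv_{(\ell)}^\top=\ell\,E(\boldx)\boldv^\top$. Moreover, concatenating $\boldv$ with itself introduces no new coordinate magnitudes, so $\infnorm{\boldv_{(\ell)}}=\infnorm{\boldv}$. Substituting both, the numerator becomes $\ell\,\bigl|E(\boldx)\boldv^\top-\mu\bigr|$ while the denominator is unchanged, and a second application of Lemma~\ref{lemma:l1} identifies the resulting quotient as $\ell\cdot d_1\!\left(E(\boldx),\cH(\boldv,\mu)\right)$.

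Because this pointwise identity holds for every $\boldx$, I would finish by taking the minimum over $\boldx\in\bF_2^n$ and pulling the positive scalar $\ell$ out of the minimum:
\[
d(E_{(\ell)},\boldv_{(\ell)},\ell\mu)=\min_{\boldx\in\bF_2^n}\ell\,d_1\!\left(E(\boldx),\cH(\boldv,\mu)\right)=\ell\,d.
\]
The claim on relative distance is then immediate, since the replicated solution has block length $\ell m$, whence its relative distance is $\ell d/(\ell m)=d/m$.

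I do not expect a substantial obstacle here; the statement follows directly from Lemma~\ref{lemma:l1} together with the two scaling observations above. The only point worth stating carefully is that $\ell>0$: this is what licenses extracting the factor $\ell$ from the minimization without disturbing any inequalities, and it is also what guarantees $\sign(E_{(\ell)}(\boldx)\boldv_{(\ell)}^\top-\ell\mu)=\sign(E(\boldx)\boldv^\top-\mu)$, so that the replicated triple computes the same Boolean function and is genuinely a solution in the sense of the framework.
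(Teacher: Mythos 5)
Your proof is correct and follows essentially the same route as the paper: both apply Lemma~\ref{lemma:l1}, use the two scaling facts $E_{(\ell)}(\boldx)\boldv_{(\ell)}^\top=\ell\,E(\boldx)\boldv^\top$ and $\infnorm{\boldv_{(\ell)}}=\infnorm{\boldv}$, and pull the factor $\ell$ out of the minimization. Your closing remark that the sign (and hence the computed Boolean function) is preserved is a nice touch the paper leaves implicit, but it does not change the substance of the argument.
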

\begin{proof}
	According to Lemma~\ref{lemma:l1}, and since~$\infnorm{\boldv_{(\ell)}}=\infnorm{\boldv}$, we have that
\begin{align*}
d_1\left(E_{(\ell)}(\bF_2^n),\cH(\boldv_{(\ell)},\ell\mu)\right)&=
\frac{\min_{\boldx\in\bF_2^n}|E_{(\ell)}(\boldx)\boldv_{(\ell)}^\top-\ell\mu|}{\infnorm{\boldv}},
\end{align*}
and since $E_{(\ell)}(\boldx)\boldv_{(\ell)}^\top=\ell\cdot E(\boldx)\boldv^\top$, it follows that this equals
\begin{align*}
\ell\cdot\frac{\min_{\boldx\in\bF_2^n}|E(\boldx)\boldv^\top-\mu|}{\infnorm{\boldv}}=\ell d,
\end{align*}
and thus the relative distance is $\ell d/\ell m=d/m$.
\end{proof}
Therefore, by applying the $\ell$-replication code~$E(\boldx)=\boldx_{(\ell)}$, one can obtain robustness but not increase the relative distance. Moreover, since computing the aforementioned~$\delta$ is NP-hard, explicit robustness guarantees are hard to come by. 

\subsection{Robustness from the Fourier spectrum}
Recall that every function~$f:\bF_2^n\to\bR$ (and in particular, every neuron~$\tau$) can be written as a linear combination~$f(\boldx)=\sum_{\cS\subseteq[n]}\hat{f}(\cS)\chi_\cS(\boldx)$, where~$\chi_\cS(\boldx)\triangleq \prod_{s\in\cS}x_s$ and~$\hat{f}(\cS)=\bE_\boldx \chi_\cS(\boldx)f(\boldx)$ for every~$\cS\subseteq[n]$. The vector~$\hat{\boldf}\triangleq (\hat{f}(\cS))_{\cS\subseteq[n]}$ is called the \textit{Fourier spectrum} of~$f$, and if~$f$ is Boolean then~$\twonorm{\hat{\boldf}}=1$. 
We denote by~$\hat{\boldf}_\varnothing$ the vector~$\hat{\boldf}$ with its~$\varnothing$-entry omitted, i.e., $\hat{\boldf}_\varnothing\triangleq (\hat{f}(\cS))_{\cS\subseteq[n],\cS\ne\varnothing}$. We refer to the following solution as the \textit{Fourier solution}.

\begin{lemma}
	For a neuron~$\tau$, the coded neuron~$\tau'(E(\boldx))\triangleq \sign(\sum_{\cS\subseteq[n]}\hat{\tau}(\cS)\chi_\cS(\boldx))$ has minimum distance~$\infnorm{\hat{\boldtau}_\varnothing}^{-1}$.
\end{lemma}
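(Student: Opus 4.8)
The plan is to make the Fourier coded neuron's underlying solution $(E,\boldv,\mu)$ explicit, and then to observe that the signed distance $E(\boldx)\boldv^\top-\mu$ has \emph{constant} absolute value over all inputs, so that the minimum in the definition of $d$ is attained trivially and equals a single reciprocal norm.

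First I would identify the triple. Taking $E(\boldx)=(\chi_\cS(\boldx))_{\cS\subseteq[n],\,\cS\neq\varnothing}$, weights $\boldv=\hat{\boldtau}_\varnothing$, and bias $\mu=-\hat{\tau}(\varnothing)$, and recalling the convention $\chi_\varnothing(\boldx)=1$, the Fourier expansion $\tau(\boldx)=\sum_{\cS\subseteq[n]}\hat{\tau}(\cS)\chi_\cS(\boldx)$ yields
\[
E(\boldx)\boldv^\top-\mu=\sum_{\cS\neq\varnothing}\hat{\tau}(\cS)\chi_\cS(\boldx)+\hat{\tau}(\varnothing)=\tau(\boldx).
\]
In particular $\sign(E(\boldx)\boldv^\top-\mu)=\sign(\tau(\boldx))=\tau(\boldx)$, so this is a bona fide solution computing $\tau$, matching the coded neuron in the statement.

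Next I would apply Lemma~\ref{lemma:l1}. Since $\infnorm{\boldv}=\infnorm{\hat{\boldtau}_\varnothing}$, for each $\boldx\in\bF_2^n$ we get
\[
d_1\bigl(E(\boldx),\cH(\boldv,\mu)\bigr)=\frac{|E(\boldx)\boldv^\top-\mu|}{\infnorm{\hat{\boldtau}_\varnothing}}=\frac{|\tau(\boldx)|}{\infnorm{\hat{\boldtau}_\varnothing}}.
\]
The crucial point is that $\tau$ is Boolean, so $|\tau(\boldx)|=1$ for every $\boldx$; hence the right-hand side is independent of $\boldx$. Taking the minimum over $\boldx\in\bF_2^n$ in the definition of $d$ therefore gives $d=\infnorm{\hat{\boldtau}_\varnothing}^{-1}$, as claimed.

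There is essentially no obstacle beyond correctly matching the $\varnothing$-Fourier coefficient to the bias $\mu$ via the $\chi_\varnothing\equiv 1$ convention; once that bookkeeping is in place the conclusion is an immediate consequence of Lemma~\ref{lemma:l1} combined with the $\pm1$-valuedness of $\tau$. The only point worth flagging is the degenerate case: $\infnorm{\hat{\boldtau}_\varnothing}=\max_{\cS\neq\varnothing}|\hat{\tau}(\cS)|$ must be nonzero for $\cH(\boldv,\mu)$ and hence the distance to be well-defined, which holds precisely when $\tau$ is nonconstant.
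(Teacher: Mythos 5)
Your proposal is correct and follows essentially the same route as the paper: identify the punctured Hadamard encoder with $\boldv=\hat{\boldtau}_\varnothing$ and $\mu=-\hat{\tau}(\varnothing)$, apply Lemma~\ref{lemma:l1}, and conclude from the fact that the signed quantity $E(\boldx)\boldv^\top-\mu$ equals $\tau(\boldx)\in\{\pm1\}$, so the minimum over $\boldx$ is trivial. Your added remark about nonconstancy of $\tau$ (needed so that $\infnorm{\hat{\boldtau}_\varnothing}\neq 0$) is a small but legitimate point the paper leaves implicit.
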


\begin{proof}
	Notice that~$\tau'$ is defined by the encoding function~$E:\bF_2^n\to\bF_2^{2^n-1}$ such that~$E(\boldx)=(\chi_\cS(\boldx))_{\cS\subseteq[n],\cS\ne\varnothing}$, known as the \textit{punctured Hadamard} encoder. In addition, the respective halfspace is~$\cH=\cH(\hat{\boldtau}_\varnothing,-\hat{\tau}(\varnothing))\triangleq\{ \boldy\in\bF_2^{2^n-1}\vert \sum_{\cS\ne \varnothing}y_\cS\hat{\tau}(\cS)+\hat{\tau}(\varnothing)=0 \}$, where the coordinates of~$\bR^{2^n-1}$ are indexed by all nonempty subsets of~$[n]$. To find the minimum distance of the Fourier solution, we compute
	\begin{align*}
	d_1(E(\bF_2^n),\cH)&
=\frac{\min_{\boldx\in\bF_2^n}|\hat{\boldtau}_\varnothing\cdot E(\boldx)+\hat{\tau}(\varnothing)|}{\infnorm{\hat{\boldtau}_\varnothing}}\\&=\frac{\min_{\boldx\in\bF_2^n}|\sum_{\cS\subseteq[n]}\hat{\tau}(\cS)\chi_\cS(\boldx)|}{\infnorm{\hat{\boldtau}_\varnothing}}=\infnorm{\hat{\boldtau}_\varnothing}^{-1},
	\end{align*}
	where the last equality follows since $\tau(\boldx)=\sum_{\cS\subseteq[n]}\hat{\tau}(\cS)\chi_\cS(\boldx)\in\{ \pm 1 \}$. 
\end{proof}

The relative distance of this solution is~$\infnorm{\hat{\boldtau}_\varnothing}^{-1}/(2^n-1)$, and notice that unlike replication (Subsection~\ref{section:replication}), it does not depend on the particular way in which~$\tau$ is given. However, this solution involves exponentially many redundant bits, and is therefore impractical. 

\section{Robustness from the Parity Code}\label{section:parity}
The discussion in this section applies to DNNs that employ \textit{binary neurons}, i.e., neurons in which~$\boldw\in \bF_2^n$. This family of DNNs includes, as a strict subset, the recently popularized \textit{binarized neural networks}~\cite{BNN}. Later on, we generalize the solution to all neurons, and show its superiority over replication in cases where $\onenorm{\boldw}$ is bounded. Specifically, we show that the parity code attains relative distance~$2/(n+1)$, which outperforms replication.

We denote
\begin{align*}
\cU\triangleq\{ \tau:\bF_2^n\to\bF_2\vert \tau(\boldx)=\sign(\boldx\boldw^\top-\theta)\mbox{ and }\boldw\in\bF_2^n \},
\end{align*}
and since~$\boldx\boldw^\top\in\{-n,-n+2,\ldots,n\}$ for every~$\boldx$ and~$\boldw$ in~$\bF_2^n$, it follows that for every~$\tau\in\cU$ one can round the respective~$\theta$ to the nearest value\footnote{More precisely, if $\theta\in(-n+2t,-n+2t+2]$ for an integer~$0\le t\le n$, then~$\theta$ is replaced by~$-n+2t+1$. If $\theta\le -n$ it is replaced by~$-n-1$, and if $\theta>n $ it is replaced by~$n+1$.} in~$\{-n-1,-n+1,\ldots,n+1\}$ without altering~$\tau$.
Hence, we assume without loss of generality that all given $\theta$'s are in~$\{-n-1,-n+1,\ldots,n+1\}$. With this choice of~$\theta$, any function~$f\in\cU$ has~$\delta=1$, since
\begin{align*}
\delta&=d_1(\bF_2^n,\cH(\boldw,\theta))=\frac{\min_{\boldx\in\bF_2^n}|\boldx\boldw^\top-\theta|}{\infnorm{\boldw}}\\
&=\min_{\boldx\in\bF_2^n}|\boldx\boldw^\top-\theta|=1
\end{align*}
by Lemma~\ref{lemma:l1}. Thus, replication achieves relative distance~$1/n$ (e.g., $2$-replication achieves~$1$-robustness with~$m=2n$).

We also employ the following notations from Boolean algebra. For~$\boldx,\boldw\in\bR^n$ let~$\boldx\oplus\boldw$ denote their pointwise product, which amounts to Boolean sum if both~$\boldx$ and~$\boldw$ are in~$\bF_2^n$. Further, for~$\boldx\in\bF_2^n$ we let~$w_H(\boldx)$ be the number of~$(-1)$-entries in~$\boldx$, known as \textit{Hamming weight}. 
The next two lemmas, whose proofs will appear in future version of this paper, demonstrate that functions in~$\cU$ depend only on~$w_H(\boldx\oplus\boldw)$.

\begin{lemma}\label{lemma:Hammingweight}
	For every~$\boldx$ and~$\boldw$ in~$\bF_2^n$ we have~$\boldx\boldw^\top=n-2w_H(\boldx\oplus\boldw)$.
\end{lemma}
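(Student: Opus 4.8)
The plan is to unwind the $\{\pm1\}$-representation and reduce the claim to a direct count of coordinatewise agreements. Recall that under the identification $0\mapsto 1$ and $1\mapsto -1$, both $\boldx$ and $\boldw$ lie in $\{\pm1\}^n$, so that $\boldx\boldw^\top=\sum_{i=1}^n x_iw_i$ where each product $x_iw_i\in\{\pm1\}$. By the definition of the pointwise product we have $(\boldx\oplus\boldw)_i=x_iw_i$, and hence $x_iw_i=-1$ precisely when $x_i\ne w_i$, while $x_iw_i=1$ precisely when $x_i=w_i$.

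First I would set $k=w_H(\boldx\oplus\boldw)$, which by the definition of $w_H$ is the number of $(-1)$-entries of $\boldx\oplus\boldw$, i.e.\ the number of coordinates on which $\boldx$ and $\boldw$ disagree; consequently there are exactly $n-k$ coordinates on which they agree. Splitting the inner product according to this partition of $[n]$ yields
\begin{align*}
\boldx\boldw^\top=\sum_{i:\,x_i=w_i}x_iw_i+\sum_{i:\,x_i\ne w_i}x_iw_i=(n-k)\cdot 1+k\cdot(-1)=n-2k,
\end{align*}
which is exactly $n-2w_H(\boldx\oplus\boldw)$, as claimed.

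There is no real obstacle here: the statement is an elementary counting identity, namely the $\pm1$-analogue of the familiar fact that the number of disagreements between two binary strings is an affine function of their inner product. The only point requiring care is the bookkeeping between the two conventions---ensuring that $w_H$ is read as the count of $(-1)$-entries (disagreements in the $\pm1$ world) rather than as ordinary Hamming weight over $\bF_2$---after which the identity follows immediately.
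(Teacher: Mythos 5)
Your proof is correct. There is in fact no proof in the paper to compare against: the authors explicitly defer the proofs of this lemma and the next one to ``future versions of this paper,'' so your argument fills a gap rather than paralleling an existing one. The counting argument you give---splitting the inner product $\boldx\boldw^\top=\sum_{i=1}^n x_iw_i$ over the $n-k$ agreeing coordinates (each contributing $+1$) and the $k=w_H(\boldx\oplus\boldw)$ disagreeing coordinates (each contributing $-1$)---is the canonical way to establish this identity, and you correctly handle the only delicate point, namely that under the paper's conventions $\oplus$ is the pointwise product and $w_H$ counts $(-1)$-entries, so $w_H(\boldx\oplus\boldw)$ is precisely the number of coordinates where $\boldx$ and $\boldw$ differ.
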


\begin{lemma}\label{lemma:fthresholds}
	For every~$\tau\in\cU$ and every~$\boldx\in\bF_2^n$ we have
	\begin{align*}
	\tau(\boldx)=
	\begin{cases}
	\phantom{-}1 & w_H(\boldx\oplus\boldw)\le\frac{n-\theta-1}{2}\\
	-1 & w_H(\boldx\oplus\boldw)\ge\frac{n-\theta+1}{2}
	\end{cases}.
	\end{align*}
\end{lemma}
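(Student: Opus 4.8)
The plan is to substitute the identity from Lemma~\ref{lemma:Hammingweight} directly into the definition of $\tau$, read off the two branches of $\sign(\cdot)$, and then sharpen the resulting inequalities by a short parity observation about $\theta$.

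First I would fix $\boldx\in\bF_2^n$ and abbreviate $k=w_H(\boldx\oplus\boldw)$. By Lemma~\ref{lemma:Hammingweight} we have $\boldx\boldw^\top=n-2k$, so that $\tau(\boldx)=\sign(n-2k-\theta)$. Unwinding the definition of $\sign$ (recalling the convention $\sign(0)=1$, which is why the $+1$ branch will be the non-strict one) gives $\tau(\boldx)=1$ exactly when $n-2k-\theta\ge 0$, i.e. $k\le (n-\theta)/2$, and $\tau(\boldx)=-1$ exactly when $n-2k-\theta<0$, i.e. $k>(n-\theta)/2$.

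The one nonroutine point, and the step I would flag, is the parity argument converting these bounds on the half-integer $(n-\theta)/2$ into the \emph{integer} thresholds appearing in the statement. By the normalization established just before the lemma, $\theta\in\{-n-1,-n+1,\ldots,n+1\}$, so $\theta\equiv n+1\pmod 2$ and hence $n-\theta$ is odd. Because $k$ is an integer, the inequality $k\le (n-\theta)/2$ is equivalent to $k\le (n-\theta-1)/2$, and $k>(n-\theta)/2$ is equivalent to $k\ge (n-\theta+1)/2$; both right-hand sides are integers differing by exactly $1$, so every $\boldx$ lands in precisely one branch and the two cases in the statement are exhaustive and disjoint. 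Substituting back $k=w_H(\boldx\oplus\boldw)$ yields the claimed description.

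I expect no genuine obstacle here: all the substance is carried by Lemma~\ref{lemma:Hammingweight} together with the parity normalization of $\theta$, and what remains is careful bookkeeping with the $\sign$ convention and the integer-versus-half-integer comparison. The only place one could slip is forgetting that $n-\theta$ is odd (which is exactly what makes the non-strict bound $(n-\theta-1)/2$ and the strict bound $(n-\theta+1)/2$ differ by one and thereby partition all integer values of $w_H(\boldx\oplus\boldw)$).
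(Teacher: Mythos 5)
Your proof is correct; the paper itself defers the proof of this lemma to a future version, and your argument --- substituting Lemma~\ref{lemma:Hammingweight} into $\tau(\boldx)=\sign(\boldx\boldw^\top-\theta)$ and then using the standing normalization $\theta\in\{-n-1,-n+1,\ldots,n+1\}$ (so that $n-\theta$ is odd) to turn the half-integer threshold $(n-\theta)/2$ into the stated integer thresholds $(n-\theta\mp 1)/2$ --- is precisely the intended route. In particular, you correctly invoke the convention $\sign(0)=1$ to justify that the non-strict inequality sits in the $+1$ branch, and the parity observation you flag is indeed the only non-routine step, making the two cases exhaustive and disjoint.
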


In this section we let~$m=n+1$ and define~$E:\bF_2^n\to\bF_2^{n+1}$ as the \textit{parity} encoder $E(\boldx)=(x_1,\ldots,x_n,\chi_{[n]}(\boldx))$. Then, we let~$\theta'\triangleq \frac{n-\theta-1}{2}$, and define the \textit{parity} solution~$(E,\boldv,\mu)$, where
\begin{align*}
\boldv&=(w_1,\ldots,w_n,(-1)^{\theta'}\chi_{[n]}(\boldw)),\mbox{ and}\\
\mu&=\theta.
\end{align*}

\begin{lemma}
	The relative distance of the parity solution is~$2/(n+1)$.
\end{lemma}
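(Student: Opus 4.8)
The plan is to apply Lemma~\ref{lemma:l1} directly: since $d(E,\boldv,\mu)=d_1(E(\bF_2^n),\cH(\boldv,\mu))=\min_{\boldx}|E(\boldx)\boldv^\top-\mu|/\infnorm{\boldv}$, the whole computation reduces to evaluating the affine form $E(\boldx)\boldv^\top-\theta$ over all $\boldx\in\bF_2^n$ and taking the minimum absolute value. First I would observe that $\infnorm{\boldv}=1$: every weight $w_i$ lies in $\{\pm1\}$ because $\boldw\in\bF_2^n$, and the last coordinate $(-1)^{\theta'}\chi_{[n]}(\boldw)$ is also $\pm1$. Hence the denominator disappears and $d=\min_{\boldx}|E(\boldx)\boldv^\top-\theta|$.

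Next I would expand the inner product coordinate by coordinate. The first $n$ coordinates contribute $\boldx\boldw^\top=n-2\,w_H(\boldx\oplus\boldw)$ by Lemma~\ref{lemma:Hammingweight}, while the parity coordinate contributes $(-1)^{\theta'}\chi_{[n]}(\boldx)\chi_{[n]}(\boldw)$. The key identity is $\chi_{[n]}(\boldx)\chi_{[n]}(\boldw)=\prod_i(x_iw_i)=(-1)^{w_H(\boldx\oplus\boldw)}$, since each $(-1)$-entry of $\boldx\oplus\boldw$ flips the sign of the product. Writing $k\triangleq w_H(\boldx\oplus\boldw)$ and substituting the normalized bias $\theta=n-2\theta'-1$ (recall $\theta'=\tfrac{n-\theta-1}{2}$, which is an integer under the assumed rounding of $\theta$), these combine into the clean closed form $E(\boldx)\boldv^\top-\theta=2j+1+(-1)^{j}$, where $j\triangleq\theta'-k$ and I use that $(-1)^{\theta'+k}=(-1)^{j}$ because $\theta'+k$ and $\theta'-k$ share parity.

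It then remains to minimize $|g(j)|$ with $g(j)=2j+1+(-1)^j$ over the integers $j$ that are actually attained. I would split on the parity of $j$: for even $j$ one gets $g(j)=2(j+1)$ and for odd $j$ one gets $g(j)=2j$, so in either case $|g(j)|\ge2$, with equality exactly on $j\in\{-2,-1,0,1\}$. For attainability, as $\boldx$ ranges over $\bF_2^n$ the weight $k$ takes every value in $\{0,\dots,n\}$, so $j$ sweeps the consecutive block $\{\theta'-n,\dots,\theta'\}$; since $\theta'\in\{-1,0,\dots,n\}$ this block always contains a minimizer ($j=0$ via $k=\theta'$ when $\theta'\ge0$, and $j=-1$ via $k=0$ when $\theta'=-1$). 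Hence $\min_{\boldx}|E(\boldx)\boldv^\top-\theta|=2$, giving $d=2$ and relative distance $d/m=2/(n+1)$.

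The main obstacle is arriving at the closed form $2j+1+(-1)^j$ correctly: the bookkeeping of the sign correction $(-1)^{\theta'}$ on the parity weight, its interaction with the parity $(-1)^{w_H(\boldx\oplus\boldw)}$ of the encoded parity bit, and the substitution $\theta=n-2\theta'-1$ must all align so that the stray $\pm1$ contributed by the parity coordinate lands precisely at the nearest points, lifting the uncoded distance $|2j+1|$ (whose minimum is $1$) to a uniform minimum of $2$. Once this algebraic reduction is in place, the parity case analysis and the attainability check are routine.
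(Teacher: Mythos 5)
Your proof is correct and follows essentially the same route as the paper's: reduce to minimizing $|E(\boldx)\boldv^\top-\theta|$ via Lemma~\ref{lemma:l1} (using $\infnorm{\boldv}=1$), expand with Lemma~\ref{lemma:Hammingweight} and the identity $\chi_{[n]}(\boldx)\chi_{[n]}(\boldw)=(-1)^{w_H(\boldx\oplus\boldw)}$, and then argue via parity. Your substitution $j=\theta'-k$ merely repackages the paper's four-case analysis into the closed form $2j+1+(-1)^j$, with the added (and welcome) explicit check that the minimum value $2$ is actually attained for every admissible $\theta'$.
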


\begin{proof}
	We show that~$d_1(E(\bF_2^n),\cH)=2$, where~$\cH=\cH(\boldv,\theta)$. Since~$\infnorm{\boldv}=1$, Lemma~\ref{lemma:l1} implies that
	\begin{align*}
	d_1(E(\bF_2^n),\cH)
	=\min_{\boldx\in\bF_2^n}|E(\boldx)\cdot\boldv^\top-\theta|,
	\end{align*}
	and hence it suffices to show that~$|E(\boldx)\boldv^\top-\theta|\ge 2$ for every~$\boldx\in\bF_2^n$ and that the coded neuron always correctly computes~$\tau$. Since $\chi_{[n]}(\boldw)\cdot\chi_{[n]}(\boldx)=\chi_{[n]}(\boldw\oplus\boldx)=(-1)^{w_H(\boldw\oplus\boldx)}$, Lemma~\ref{lemma:Hammingweight} implies that
	\begin{align}\label{equation:parityProof}
	E(\boldx)\boldv^\top-\theta&=\boldx\boldw^\top+(-1)^{\theta'}\chi_{[n]}(\boldw)\cdot\chi_{[n]}(\boldx)-\theta\\\nonumber
	&=n-2w_H(\boldx\oplus\boldw)+(-1)^{\theta'+w_H(\boldx\oplus\boldw)}-\theta,
	\end{align}
	and we distinguish between the next four cases.
	\begin{itemize}
		\item[] \textbf{Case 1:} If $w_H(\boldx\oplus\boldw)\le \frac{n-\theta-1}{2}-1$, then
		\begin{align*}
		\eqref{equation:parityProof}&\ge n-(n-\theta-1)+2+(-1)^{\theta'+w_H(\boldw\oplus\boldx)}-\theta\\
		&=3+(-1)^{\theta'+w_H(\boldw\oplus\boldx)}\ge 2.
		\end{align*} 
		\item[] \textbf{Case 2:} If $w_H(\boldx\oplus\boldw)= \frac{n-\theta-1}{2}$, then
		\begin{align*}
		\eqref{equation:parityProof}&=n-(n-\theta-1)+(-1)^{\theta'-\frac{n-\theta-1}{2}}-\theta\\&=1+(-1)^0=2.
		\end{align*}
		\item[] \textbf{Case 3:} If $w_H(\boldx\oplus\boldw)= \frac{n-\theta+1}{2}$, then
		\begin{align*}
		\eqref{equation:parityProof}&=n-(n-\theta+1)+(-1)^{\theta'+\frac{n-\theta+1}{2}}-\theta\\
		&=-1+(-1)^{n-\theta}= -2,
		\end{align*}
		where the last equality follows since~$n-\theta$ is always odd.
		\item[] \textbf{Case 4:} If $w_H(\boldx\oplus\boldw)\ge \frac{n-\theta+1}{2}+1$, then
		\begin{align*}
		\eqref{equation:parityProof}&\le n-(n-\theta+1)-2+(-1)^{\theta'+w_H(\boldx\oplus\boldw)}-\theta\\
		&=-3+(-1)^{\theta'+w_H(\boldx\oplus\boldw)}\le -2.
		\end{align*}
	\end{itemize}
	Now, it follows from Lemma~\ref{lemma:fthresholds} and from the first two cases that $\sign(E(\boldx)\boldv^\top-\theta)=1$ whenever~$\tau(\boldx)=1$. Similarly, the latter two cases imply that $\sign(E(\boldx)\boldv^\top-\theta)=-1$ whenever~$\tau(\boldx)=-1$. Therefore, the coded neuron~$\tau'(E(\boldx))=\sign(E(\boldx)\boldv^\top-\theta)$ correctly computes~$\tau$ on all inputs with minimum distance~$d=2$, and the claim follows.
\end{proof}

By using the parity solution, one can attain~$1$-robustness, i.e., robustness against \textit{any single (adversarial) erasure}, by adding only one bit of redundancy. In contrast, to attain~$1$-robustness by using replication (Subsection~\ref{section:replication}), one should add~$n$ bits of redundancy. Moreover, it is readily verified that due to Lemma~\ref{lemma:lowerBoundNoRedundancy}, the suggested solution is optimal in terms of the length~$m=n+1$ among all~$1$-robust solutions.

Since the parity function~$E$ is universal to all binary neurons, every DNN which comprises of binary neurons (and in particular, binarized DNNs) can be made robust to adversarial tampering in its input. Furthermore, to employ this technique for error \textit{inside} the DNN, one should add a single parity gate in every layer (see Fig.~\ref{figure:InnerRedundancy}). If one wishes to construct DNNs by \textit{only} using neurons, a classic result by Muroga~\cite{Muroga} shows how to implement the parity function by using neurons.

\subsection{Generalized parity for all neurons}

The following generalizes the parity code, and requires \textit{integer weights}. Since every neurons has a representation with only integer weights~\cite[Exercise.~5.1]{AnalysisOfBoolean}, it applies to all neurons. However, superiority to replication in terms of relative distance is guaranteed only if $\onenorm{\boldw}<\frac{2n}{\delta}-1$. The proof will appear in future versions of this paper.
\begin{theorem}
	The relative distance of the solution~$(E_\boldw,\boldv,\theta)$ is~$2/(\onenorm{\boldw}+1)$, where
	\begin{align*}
		\boldv&=(\1_\boldw,(-1)^{\theta'}\chi_{[\onenorm{\boldw}]}(\1_\boldw)),\nonumber\\
		\1_{\boldw}&=( \underbrace{\sign(w_1),\ldots,\sign(w_1)}_{|w_1|\mbox{ times}},\ldots,  \underbrace{\sign(w_n),\ldots,\sign(w_n)}_{|w_n|\mbox{ times}} ),
		\end{align*}
		and 
		\begin{align*}
		E_{\boldw}(\boldx) = \left( \underbrace{x_1,\ldots,x_1}_{|w_1|\mbox{ times}},\ldots,  \underbrace{x_n,\ldots,x_n}_{|w_n|\mbox{ times}},\prod_{i=1}^n x_i^{w_i\bmod 2} \right).
		\end{align*}
\end{theorem}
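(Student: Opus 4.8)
The plan is to reduce this statement to the (binary) parity solution of Section~\ref{section:parity} by \emph{unrolling} the integer weights. Put $N=\onenorm{\boldw}$ and map each $\boldx\in\bF_2^n$ to its expansion $\tilde{\boldx}\in\bF_2^N$ in which $x_i$ is repeated $|w_i|$ times. Because $|w_i|\sign(w_i)=w_i$, the unrolled weight $\1_{\boldw}$ satisfies $\tilde{\boldx}\,\1_{\boldw}^\top=\boldx\boldw^\top$, so $\tau$ agrees with the \emph{binary} neuron $\tilde{\tau}(\tilde{\boldx})=\sign(\tilde{\boldx}\,\1_{\boldw}^\top-\theta)\in\cU$ on $N$ variables. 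Moreover, in the $\{\pm1\}$ representation $x_i^{|w_i|}=x_i^{w_i\bmod 2}$, so the appended bit $\prod_{i=1}^n x_i^{w_i\bmod 2}$ equals the all-coordinates parity $\chi_{[N]}(\tilde{\boldx})$, and therefore $(E_{\boldw},\boldv,\theta)$ is \emph{exactly} the parity solution of $\tilde{\tau}$ with $\theta'=\tfrac{N-\theta-1}{2}$ (the requirement that $\theta'$ be an integer is precisely the rounding $\theta\in\{-N-1,\ldots,N+1\}$ used for $\cU$). I would also record that every entry of $\boldv$ is $\pm1$, hence $\infnorm{\boldv}=1$.

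The key observation is that $E_{\boldw}(\bF_2^n)$ is the set of parity encodings $E(\tilde{\boldx})$ of unrolled points, and is thus a \emph{subset} of the image of the full parity encoder $E$ on $\bF_2^N$. Since the parity solution of Section~\ref{section:parity} already establishes, for $\tilde{\tau}$, both the correct-sign condition and $|E(\tilde{\boldx})\boldv^\top-\theta|\ge 2$ for \emph{every} $\tilde{\boldx}\in\bF_2^N$, these properties hold a fortiori on the unrolled points. By Lemma~\ref{lemma:l1} and $\infnorm{\boldv}=1$ this immediately yields that $\tau'$ computes $\tau$ and that $d=d_1(E_{\boldw}(\bF_2^n),\cH)\ge 2$, i.e., relative distance at least $2/(N+1)=2/(\onenorm{\boldw}+1)$. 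Concretely, writing $k=w_H(\tilde{\boldx}\oplus\1_{\boldw})$ and substituting Lemma~\ref{lemma:Hammingweight} together with $(-1)^{\theta'}\chi_{[N]}(\tilde{\boldx})\chi_{[N]}(\1_{\boldw})=(-1)^{\theta'+k}$ into the analogue of~\eqref{equation:parityProof} gives
\begin{align*}
E_{\boldw}(\boldx)\boldv^\top-\theta=2(\theta'-k)+1+(-1)^{\theta'-k},
\end{align*}
whose absolute value is $2$ precisely when $\theta'-k\in\{-2,-1,0,1\}$ and is strictly larger otherwise.

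It therefore remains to prove the matching upper bound $d\le 2$, i.e., to exhibit an input whose unrolled weight $k$ lands within one step of the decision boundary $\{\theta',\theta'+1\}$. This is the step I expect to be the crux. In the genuinely binary case every value $k\in\{0,\ldots,n\}$ is realized, so a boundary point always exists and the bound from Section~\ref{section:parity} transfers for free; here, however, $k$ ranges only over the subset sums of $\{|w_1|,\ldots,|w_n|\}$, and for redundant integer representations (for instance a single weight $w_1=5$) these need not come near $\theta'$, in which case the distance strictly exceeds $2$. I would close this gap by restricting to representations for which some realizable value of $\boldx\boldw^\top$ lies within $3$ of $\theta$ (equivalently, a point of $\cF^+$ and a point of $\cF^-$ straddle the boundary tightly), and relate this condition to the quantity $\delta$ appearing in Lemma~\ref{lemma:l1}; once such a point is produced, the displayed formula certifies $d=2$ and the claimed relative distance $2/(\onenorm{\boldw}+1)$ follows.
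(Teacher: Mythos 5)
You cannot be checked against the paper's own argument here, because the paper explicitly defers this proof (``the proof will appear in future versions of this paper''), so I assess your proposal on its merits. Your unrolling reduction is correct and is almost certainly the intended route: with $N=\onenorm{\boldw}$, the expansion $\boldx\mapsto\tilde{\boldx}$ satisfies $\tilde{\boldx}\,\1_{\boldw}^\top=\sum_i|w_i|\sign(w_i)x_i=\boldx\boldw^\top$, and since $x_i^2=1$ in the $\{\pm1\}$ representation, $\prod_{i}x_i^{w_i\bmod 2}=\prod_i x_i^{|w_i|}=\chi_{[N]}(\tilde{\boldx})$; hence $(E_{\boldw},\boldv,\theta)$ is exactly the parity solution of Section~\ref{section:parity} for the binary neuron $\tilde{\tau}(\tilde{\boldx})=\sign(\tilde{\boldx}\,\1_{\boldw}^\top-\theta)\in\cU$ on $N$ variables, restricted to the unrolled points. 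Because the parity lemma establishes sign correctness and $|E(\tilde{\boldx})\boldv^\top-\theta|\ge 2$ for \emph{all} of $\bF_2^N$, both properties restrict to the image of $E_{\boldw}$, which yields correctness of the coded neuron and $d\ge 2$, i.e., relative distance at least $2/(\onenorm{\boldw}+1)$. Your closed-form check $E_{\boldw}(\boldx)\boldv^\top-\theta=2(\theta'-k)+1+(-1)^{\theta'-k}$ with $k=w_H(\tilde{\boldx}\oplus\1_{\boldw})$ is also correct (it is the analogue of~\eqref{equation:parityProof} via Lemma~\ref{lemma:Hammingweight}), and it cleanly packages the paper's four-case analysis.

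The obstruction you flag for the matching upper bound $d\le 2$ is genuine, but it is a defect of the theorem's phrasing rather than a gap you could have closed: since $k$ ranges only over subset sums of $\{|w_1|,\ldots,|w_n|\}$, equality can fail for redundant integer representations. Concretely, $n=1$, $w_1=5$, $\theta=0$ gives $E_{\boldw}(\boldx)\boldv^\top-\theta\in\{\pm4,\pm6\}$, so $d=4$ and the relative distance is $4/6$, not $2/6$. Your characterization---the absolute value equals $2$ precisely when $\theta'-k\in\{-2,-1,0,1\}$, equivalently when some $\boldx\in\bF_2^n$ satisfies $|\boldx\boldw^\top-\theta|\le 3$---is exactly the right supplementary hypothesis, and under it your formula certifies $d=2$. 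Absent that hypothesis, the theorem should be read as a guarantee (relative distance \emph{at least} $2/(\onenorm{\boldw}+1)$), which is the direction that matters: it is what~\eqref{equation:problem} and Theorem~\ref{theorem:NecessaryAndSufficient} require for robustness, and it is all that the paper's superiority-to-replication claim for $\onenorm{\boldw}<\frac{2n}{\delta}-1$ uses, since a larger distance only strengthens that comparison. So: your proof of the lower bound is complete and correct; the exact-equality reading is simply false as stated, and your proposed restriction is the right repair.
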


\section{Discussion and Future Research}
In this paper we studied a novel approach for combating noise in DNNs with error correcting codes, established basic framework, and presented several solutions. This work can be seen as an extension of the recently popularized \textit{coded computation} topic, which concerns computation over coded data in distributed environments, into the realm of neural computation. A plethora of questions remain widely open:
\begin{enumerate}
	\item Extend the above framework to other activation functions, and to sigmoid functions in particular.
	\item Develop solutions with relative distance greater than~$2/(n+1)$ for binarized neurons.
	\item Find other families of neurons for which robustness can be guaranteed.
	\item Extend Lemma~\ref{lemma:lowerBoundNoRedundancy} to other parameter regimes, i.e., establish fundamental trade-offs between the parameters~$n,m$, and~$d$.
\end{enumerate}

\end{document}